\newtheorem{theorem}{Theorem}
\newtheorem{lemma}{Lemma}
\theoremstyle{definition}
\newtheorem{definition}{Definition}
\theoremstyle{definition}
\theoremstyle{remark}
\theoremstyle{definition}
\newtheorem{property}{Property}
\theoremstyle{definition}
\newtheorem{assumption}{Assumption}
\theoremstyle{definition}
\theoremstyle{definition}
\newtheorem{problem}{Problem}
\theoremstyle{definition}
\theoremstyle{definition}
\theoremstyle{definition}
\theoremstyle{definition}
\renewcommand{\cal}[1]{\mathcal{ #1 }}
\newcommand{\mb}[1]{\mathbf{ #1 }}
\newcommand{\bs}[1]{\boldsymbol{#1}}
\renewcommand\b[1]{%
  \ifcat\noexpand#1\relax 
    \bm{#1}
  \else
    \mathbf{#1}
  \fi
}
\newcommand{\order}{p}
\newcommand{\R}{\mathbb{R}}
\newcommand{\Bez}{\b p}
\newcommand{\BEZ}{\b P}
\newcommand{\x}{\mb{x}} 
\title{\LARGE \textbf{Dynamically Feasible Path Planning in Cluttered Environments \\ via Reachable B\'ezier Polytopes}}
\begin{document}

\author{Noel Csomay-Shanklin, William D. Compton, Aaron D. Ames
\thanks{The authors are with the Control and Dynamical Systems Department at the California Institute of Technology. Email: \{\texttt{noelcs@caltech.edu}\}.}%
\thanks{This research is supported by Technology Innovation Institute (TII).}
}

\maketitle

\begin{abstract}%

The deployment of robotic systems in real world environments requires the ability to quickly produce paths through cluttered, non-convex spaces. These planned trajectories must be both kinematically feasible (i.e., collision free) and dynamically feasible (i.e., satisfy the underlying system dynamics), necessitating a consideration of both the free space and the dynamics of the robot in the path planning phase. 
In this work, we explore the application of \textit{reachable B\'{e}zier polytopes} as an efficient tool for generating trajectories satisfying both kinematic and dynamic requirements. 
Furthermore, we demonstrate that by offloading specific computation tasks to the GPU, such an algorithm can meet tight real time requirements.
We propose a layered control architecture that efficiently produces collision free and dynamically feasible paths for nonlinear control systems, and demonstrate the framework on the tasks of 3D hopping in a cluttered environment.
\end{abstract}

\section{Introduction}

The development of control strategies which enable robots to operate in cluttered environments has been a focal challenge in control, dating back to the early days of mobile robots \cite{nilsson_mobile_1969}.
While it is theoretically possible to accomplish this task in a single shot 
via policy-design \cite{wang_survey_2021, qureshi_motion_2020, pfeiffer_perception_2017, bency_neural_2019}, this poses significant challenges including reasoning about high dimensional state and action spaces, nonlinear dynamics, and having a lack of safety and feasibility guarantees. 
As a result, it is often more practical to deploy hierarchical control architectures, whereby high level path planning algorithms are paired with low-level tracking controllers. Typically, kinematically feasible (i.e., collision free) paths are passed down the control stack, and dynamic feasibility (i.e., satisfaction of the underlying system dynamics) is an assumed property of the low-level controller. 
This top-down approach, however, provides minimal guarantees of state and input constraint satisfaction, as violations of this assumption at the low level may cause significant deviations from the planned high-level trajectory.






%

Constructing a pipeline which is able to guarantee feasibility necessitates reasoning about both the dynamics and free space information during the planning phase; algorithms which plan for both kinematic and dynamic feasibility are termed \textit{kinodynamic planners} \cite{schmerling2021kinodynamic}.
Kinodynamic planners can broadly be placed into two categories: sampling-based and optimization-based methods.
%
%
%
%
%
%
In the sampling-based paradigm, one main way of producing kinodynamically feasible paths is by sampling policies from a discrete collection of predefined primitives \cite{donald1993kinodynamic, lavalle2001randomized}. This method suffers from the fact that the predefined policies induce a bias and may not make meaningful progress towards the goal. Another approach is to randomly sample new points and connect them the nearest node on the graph if a two point boundary value problem is feasible \cite{webb_kinodynamic_2013}, which can be viewed as an extension to the classic RRT \cite{lavalle1998rapidlyexploring} method of kinematic path planning. This method requires solving as many such problems as there are nodes in the graph, which can be expensive to compute for high dimensional nonlinear systems.

\begin{figure}[t!]
    \centering
    \href{https://www.youtube.com/watch?v=TrScjfhp3G4}{
    \includegraphics[width=0.95\columnwidth]{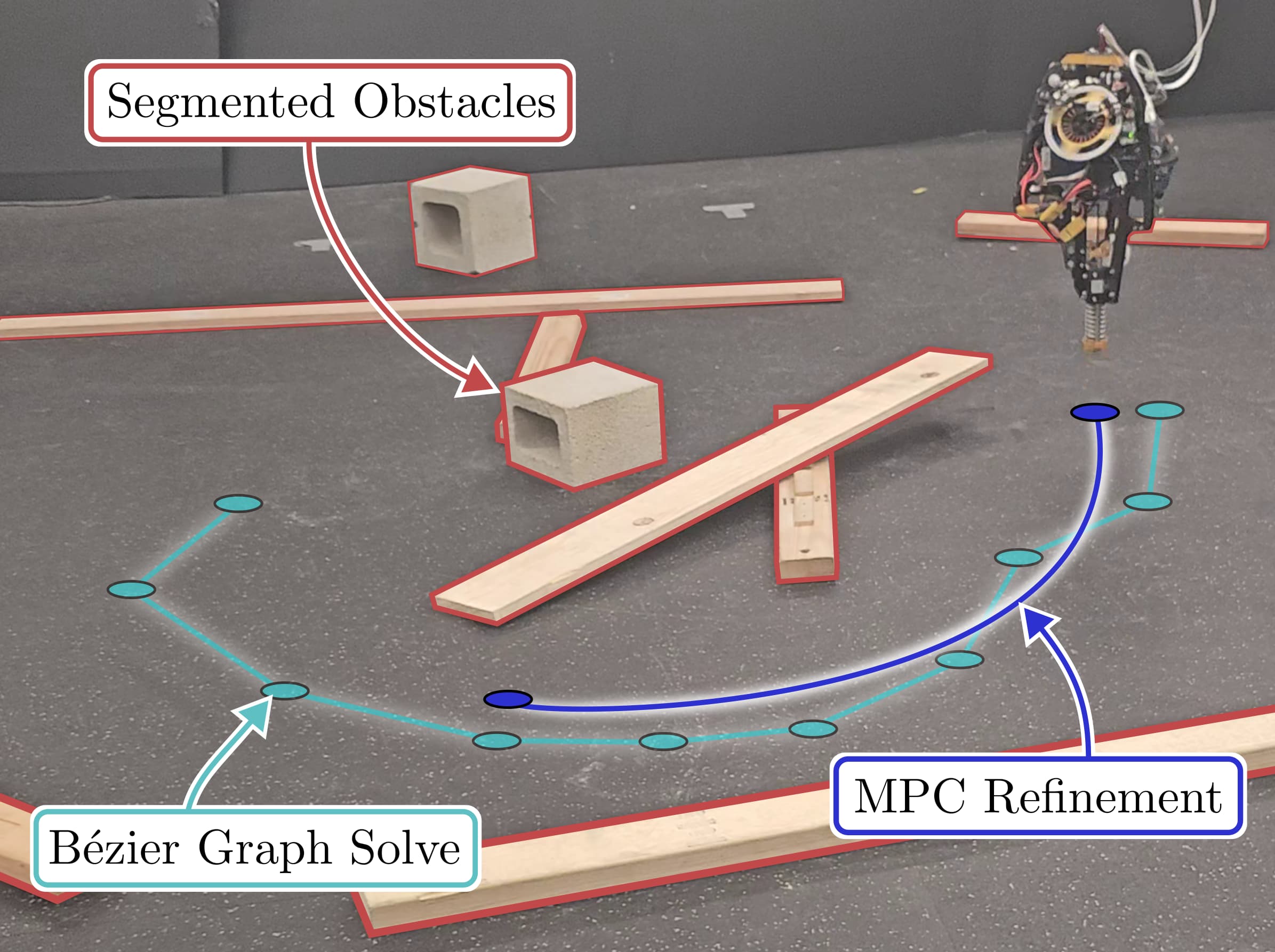}
    }
    \caption{The proposed framework performing path planning around segmented obstacles. A coarse dynamically feasible path is solved for on a graph with B\'ezier polynomial edges, and is further refined with MPC.}
    \label{fig:hero}
    \vspace{-6mm}
\end{figure}


%
Sampling-based methods face two major the challenges -- path suboptimality due to the probability of sampling the true optimal path being zero, and bias, either through the discretized policy design, or Voronoi bias introduced by using a Euclidean metric \cite{voronoi}. To address suboptimality, various methods have proposed refinements to the path which increase the optimality \cite{webb_kinodynamic_2013}. To address the notion of bias, the notion of reachability has been introduced into sampling-based planners \cite{reeds_optimal_1990}.
In particular, authors have developed reachability-guided RRT variants \cite{fridovich-keil_planning_2018,shkolnik_reachability-guided_2009,wu_r3t_2020} that avoid solving boundary value problems during graph construction. For nonlinear systems, however, this top-down approach provides guarantees of feasibility which are often approximate and rely on using local linearizations of the nonlinearities; this requires the system to remain sufficiently close to the nominal trajectory during execution. 

In contrast to 
building a graph of sampled states, optimization-based approaches instead try to directly generate optimal paths to the goal state. These methods often require the solution of large mixed integer programs to generate optimal paths to the goal, either via a keep-out philosophy \cite{bigM}, or a keep-in philosophy via convex decomposition \cite{deits_efficient_2015, marcucci2021shortest, marcucci_motion_2022}.
To address the benefits and drawbacks sampling-based and optimization techniques, various approaches have been proposed to combine these methods \cite{lee2021efficient, stoneman2014embedding, fernbach2017kinodynamic}. Despite these advancements, the computational burden associated with these approaches hinders their deployment in obstacle-rich environments in real time.  As a means to mitigate these computational limitations, layered approaches to control synthesis have proven effective \cite{matni2024quantitative}.
%


In this work, we leverage a layered control architecture for producing kinodynamically feasible trajectories for nonlinear systems. Motivated by \cite{marcucci_fast_2023}, we employ B\'ezier basis polyonmials for the task of solving paths through nonconvex spaces. In contrast to this and other works, significant effort is put towards guaranteeing dynamic feasibility of the generated paths in order to produce a guaranteed feasible pipeline. Specifically, we leverage the notion of \textit{B\'ezier reachable polytopes}, i.e., polytopic reachable sets in the space of polynomial B\'ezier reference trajectories, as a primitive for feasible trajectory tracking. Importantly, as the dynamics are abstracted via the reachable set of the mid level controller, the planner does not need to include differential constraints. This work builds on \cite{csomay-shanklin_multi-rate_2022} by incorporating a high level path planner in coordination with a mid and low level controller. As seen in Figure~\ref{fig:hero}, we demonstrate the effectiveness of this framework by planning paths in cluttered environments on the high-dimensional 3D hopping robot, ARCHER \cite{ambrose_creating_2022}.


\begin{figure*}
    \centering
    \includegraphics[width=\textwidth]{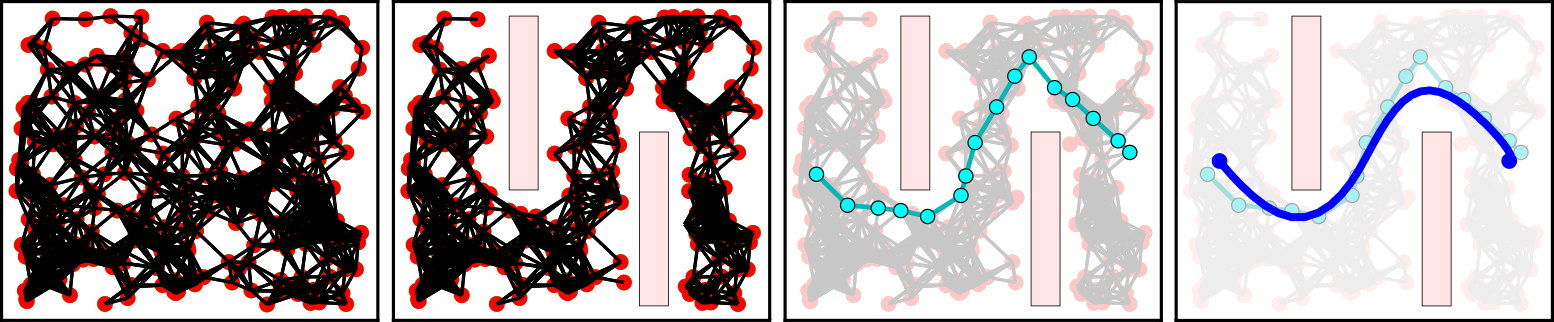}
    \vspace{-4mm}
    \caption{The path planning framework presented in Algorithm~\ref{algo:BezGraph}. From left to right: a) A B\'ezier graph is constructed, b) it is cut based on the present obstacles, c) a path is solved, and d) the path is refined with MPC.}
    \label{fig:main_algo}
    \vspace{-4mm}
\end{figure*}

\section{Background}
\subsection{Reduced Order Models and Problem Statement}
We begin with the following nonlinear control system:
\begin{align}
    \label{eqn:nonl}
	\dot{\b x} = \b f(\b x, \b u),
\end{align}
where $\b x \in \R^N$ is the state, $\b u \in \R^M$ is the input, and the vector $\b f: \R^N\times\R^M\to\R^N$ is assumed to be continuously differentiable in each argument. 
Directly synthesizing control actions that achieve complex tasks such as path planning may be challenging, in part due to the nonlinearities of the dynamics. To alleviate this complexity, roboticists often leverage reduced-order models, which serve as template systems that enable desired behaviors to be constructed in a computationally tractable way. To this end, consider the planning dynamics:
\begin{align}
\label{eqn:rom}
    \dot{\b x}_d  =\begin{bmatrix} \b 0 & \b I_{n-m} \\ \b 0&\b 0\end{bmatrix} \b x_d + \begin{bmatrix} \b 0 \\ \b I \end{bmatrix}\b u_d,
\end{align}
where $\b x_d \in \R^n$ is the reduced-order state and $\b u_d \in \R^m$ is the reduced-order input. Note that the the subsequent discussion, the planning dynamics in \ref{eqn:rom} can be extended to the case of full-state feedback linearizeable systems (as in \cite{csomay-shanklin_multi-rate_2022}), but for the sake of simplicity we restrict our attention to integrators. 

Let $I=\begin{bmatrix} 0 & T\end{bmatrix}\subset\R_{\ge 0}$ be a time interval, $\b u_d(\cdot)$ an input trajectory defined over I, and $\b x_d:I\to\R^n$ be an integral curve of the reduced-order model \eqref{eqn:rom}. We can correspond a reduced order system with a full order system through a surjective mapping $\b \Pi:\R^N\to \R^n$, and feedback controller $\b k:\R^N\times \R^n \to \R^M$, which takes in reduced order model trajectories and produces control actions for the full order system. A desired property of this controller is its ability to maintain bounded tracking error:
\begin{definition}\label{def:tracking_inv}
     The set $\mathcal{E}:I\to \mathcal{X}$ is a \textit{tracking invariant} for a desired trajectory $\b x_d : I \to \mathcal{X}_d$ for the system \eqref{eqn:nonl} if:
     \begin{align*}
        \b\Pi(\b x(t)) \in \b x_d(t) \oplus \mathcal{E}(t),
    \end{align*}
    where $\oplus$ denotes the Minkowski sum.
\end{definition}

Many existing works discuss ways to produce controllers with tracking invariant sets $\mathcal{E}$ \cite{apgar2018fast,sontag_characterizations_1995}; here, we assume such a tracking controller is given.
%
%
For a desired trajectory of the reduced-order model $\b x_d(\cdot)$, a suitable tracking controller $\b k$, and an initial condition $\b x(0)\in\R^N$, we can define the closed-loop system as:
\begin{align}
    \label{eq:cl}
    \b x_{\rm cl}(t)=\b x(0) + \int_0^t \b f(\b x(\tau), \b k(\b x(\tau), \b x_d(\tau)))d\tau.
\end{align}
We can now introduce the main problem statement:
\begin{problem}\label{prb:nav}
    Given a reduced order model \eqref{eqn:rom}, with state constraint $\cal{X}_d\subset \R^n$, a full order model \eqref{eqn:nonl} with input bounds $\cal{U}\subset \R^M$, a worse case tracking error bound $\bar{\cal{E}} \supset \cal{E}(t) \; \forall t$, and an initial and goal location for the full order model $\b x(0)$ and $\b x_G$,
    find a trajectory $\b x_d(t)$ such that:
    \begin{subequations}
     \begin{align}
        \lim_{t \to \infty} \b \Pi(\b x_{\rm cl}(t)) &\in \b\Pi(\b x_G) \oplus \bar{\cal{E}},\label{eqn:prb1_goal}\\
        \b \Pi(\b x_{\rm cl}(t)) &\in \cal{X}_d~~~\forall t, \label{eqn:prb1_safe} \\
        \b k(\b x_{\rm cl}(t), \b x_d(t)) &\in \cal{U} ~~~\forall t.\label{eqn:prb1_input}
    \end{align}   
    \end{subequations}
\end{problem}
Simply put, we are searching for a reduced order model trajectory such that the closed-loop full-order model will navigate to a goal position while respecting state and input constraints. To solve this problem, we will use B\'ezier curves to parameterize the space of trajectories passed from the planning level to the feedback level.

\subsection{B\'ezier Curves and Graphs}

A curve $\b b:I \to \R^m$ is said to be a B\'ezier curve \cite{kamermans2020primer} of order $\order\in\mathbb{N}$ if it is of the form:
\begin{align*}
    \b b(t) =   \Bez\b z(t),
\end{align*}
where $\b z:I \to \R^{\order+1}$ is a Bernstein basis polynomial of degree $\order$ and $\Bez \in \R^{m\times\order+1}$ are a collection of $\order+1$ \textit{control points} of dimension $m$. There exists a matrix $\b H\in\R^{p+1\times p+1}$  which defines a linear relationship between control points of a curve $\b b$ and its derivative via:
$$\dot{\b b}(t) =  \b \Bez \b H \b z(t).$$
This enables us to define a state space curve $\b B:I\to \R^{n}$:
\begin{align}
\label{eqn:BezDef}
	\b x_d(t) \triangleq \begin{bmatrix} \b b(t) \\  \vdots \\\b b^{(\gamma-1)}(t)\end{bmatrix} =  \underbrace{%
			\begin{bmatrix}\Bez \\ \vdots \\ \Bez\b H^{\gamma-1}\end{bmatrix}}_{\triangleq \bs{\BEZ}}\b z(t),
\end{align}
where $\gamma > 0$ is the relative degree of an output $\b b(t)$ for the system \eqref{eqn:rom} and is defined such that $m\gamma = n$.
The columns of the matrix $\BEZ\in\R^{n\times p+1}$, denoted as $\b \BEZ_{j}$ for $j=0,\ldots,p$, represent the collection of $n$ dimensional control points of the B\'{e}zier curve $\b B(\cdot)$ in the state space.

Given a B\'ezier polynomial control input $\b u_d(\cdot)$, the integral curve $\b x_d(\cdot)$ can be exactly described by the B\'ezier polynomial \eqref{eqn:BezDef} of degree $\gamma$ larger than that of $\b u_d(\cdot)$. Furthmore, given two boundary conditions, there exist unique B\'ezier curves (of minimal degree $p = 2\gamma-1$) connecting them:
\begin{lemma}[\cite{future_acc}]
\label{lemm:disc_to_bez}
    Given two points $\b x_0, \b x_T \in \R^n$, there exists a unique matrix $\b D$ such that $\b \BEZ = \b D \begin{bmatrix} \b x_0 & \b x_T\end{bmatrix}$ are the control points of a B\'ezier curve $\b x_d(\cdot)$ satisfying $\b x_d(0) = \b x_0$ and $\b x_d(T) = \b x_T$.
\end{lemma}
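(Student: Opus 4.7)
The plan is to reduce the claim to invertibility of two decoupled, block-triangular linear systems---one governing the boundary values at $t=0$ and one at $t=T$. Since the map from the B\'ezier control points to the pair $(\b x_d(0), \b x_d(T))$ is linear, uniqueness of $\b D$ will follow as soon as I show this map is a bijection.

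First, I would use the endpoint property of the Bernstein basis, namely $\b z(0) = \b e_0$ and $\b z(T) = \b e_p$ (the first and last standard basis vectors in $\R^{p+1}$), to deduce from \eqref{eqn:BezDef} that $\b x_d(0)$ and $\b x_d(T)$ are precisely the first and last columns of $\bs{\BEZ}$. Iterating the differentiation identity $\dot{\b b}(t) = \b p \b H \b z(t)$ then yields that, for each $k = 0, \ldots, \gamma-1$, the $k$-th block of $\b x_d(0)$ equals the first column of $\b p \b H^k$, i.e.\ $\b b^{(k)}(0)$, and analogously at $t=T$.

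Next I would invoke the classical Bernstein endpoint-derivative formula
\[
    \b b^{(k)}(0) = \frac{p!}{(p-k)!\, T^k} \sum_{j=0}^{k}(-1)^{k-j}\binom{k}{j} \b p_j,
\]
where $\b p_j \in \R^m$ denotes the $j$-th control point of $\b p$, together with its mirror image at $t = T$. This reveals that the map $(\b p_0, \ldots, \b p_{\gamma-1}) \mapsto \b x_d(0)$ is block lower triangular with $k$-th diagonal block equal to $\frac{p!}{(p-k)!\, T^k}\b I_m$, hence invertible. The analogous argument at $t = T$ produces an invertible block upper-triangular map from $(\b p_\gamma, \ldots, \b p_{2\gamma-1})$ to $\b x_d(T)$. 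Crucially, since $p+1 = 2\gamma$, these two disjoint groups of control points together exhaust the full set $\{\b p_0,\ldots,\b p_p\}$, so no control point is constrained twice or left free.

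Finally, I would stack the two inverse maps to obtain a unique linear operator $\b D$ sending $(\b x_0,\b x_T)$ to the full collection of control points, from which $\b P$ is recovered via \eqref{eqn:BezDef}; uniqueness of $\b D$ is immediate from uniqueness of the two triangular inverses. The main---though still routine---obstacle is verifying that the diagonal coefficients $\frac{p!}{(p-k)!\,T^k}$ never vanish for $0 \le k \le \gamma - 1$, which is immediate since $\gamma - 1 < p$, so the leading derivative factor is strictly positive.
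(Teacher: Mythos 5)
The paper does not prove this lemma---it is imported from \cite{future_acc} and stated without argument---so there is no in-paper proof to compare against. Your proposal is a correct and essentially standard self-contained proof: the Bernstein endpoint-derivative formula makes $\b b^{(k)}(0)$ depend only on $\b p_0,\ldots,\b p_k$ (and $\b b^{(k)}(T)$ only on $\b p_{p-k},\ldots,\b p_p$) with nonzero leading coefficient $\tfrac{p!}{(p-k)!\,T^k}$, the two resulting triangular systems are invertible, and the degree count $p+1=2\gamma$ ensures the two groups of control points partition the full set, giving existence and uniqueness of the linear map $\b D$. The only caveat worth stating explicitly is that uniqueness holds precisely because the degree is fixed at the minimal value $p=2\gamma-1$ (as the surrounding text of the paper indicates); for higher degree the interpolation problem is underdetermined and no unique $\b D$ exists, so your tacit use of $p+1=2\gamma$ is load-bearing and should be flagged as a hypothesis rather than left implicit.
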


This statement allows us to plan dynamically feasible curves simply by reasoning about discrete points in state space.
This concept is paried with a key property of B\'ezier curves -- namely, the continuous time curve is bounded by the convex hull of the control points:
\begin{property}[Convex Hull \cite{kamermans2020primer}]\label{prp:conv_hull}
A B\'ezier curve $\b x_d$ is contained in the convex hull of its control points:
	$$\b B(t) \in \text{conv}(\{\b \BEZ_{j}\}),~~ j=0,\ldots, \order,~~\forall t\in I.$$
\end{property}
\begin{property}[Path Length \cite{marcucci_fast_2023}]\label{prp:path_length}
    The path length of a B\'ezier curve $\b x_d(\cdot)$ is upper bounded by the norm distance of its control points:
    \begin{align*}
        \int_0^T \| \dot{\b x}_d(\tau)\|_2 d\tau \le \sum_{j=0}^{p-1} \|\b \BEZ_{j+1} - \b \BEZ_{j}\|_2
    \end{align*}
\end{property}
\begin{property}[Subdividion \cite{kamermans2020primer}]\label{prp:subd}
    A B\'ezier curve $\b x_d$ defined on the time interval $I=\begin{bmatrix} 0 & T\end{bmatrix}$ can be subdivided into arbitrarily many B\'ezier curves given a subdivision of $I$.
\end{property}

Besides the standard aforementioned properties, the reachable set of B\'ezier polynomials which when tracked satisfy state and input constraints of the tracking system is efficiently representable:
\begin{theorem}[\cite{future_acc}]
\label{thm:bez}
    Given a convex state constraint set $\mathcal{X}_d\subset\R^n$, input constraint set $\mathcal{U}\subset\R^M$ and error tracking bound $\bar{\mathcal{E}}$, there exist matrices $\b F$ and $\b G$ such that any two points $\b x_1,\b x_2\in\R^n$ satisfying:
    \begin{align*}
        \b F\begin{bmatrix} \b x_1^\top  & \b x_2^\top\end{bmatrix}^\top \le \b G
    \end{align*}
    implies the existence of a B\'ezier curve $\b x_d$ with $\b x_d(0) = \b x_1$, $\b x_d(T) = \b x_2$, such that, when tracked, the closed loop system satisfies $\b \Pi( \b x_{\rm cl}(t)) \in \mathcal{X}_d$ and $\b k(\b x_{\rm cl}(t) ,\b x_d(t)) \in \mathcal{U}$.
\end{theorem}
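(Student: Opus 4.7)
The plan is to convert both the state and input safety requirements into polyhedral (linear) constraints on the concatenated endpoint $[\b x_1^\top, \b x_2^\top]^\top$, and then stack these constraints to define $\b F$ and $\b G$. The engine that makes this possible is the composition of three facts already in hand: Lemma~\ref{lemm:disc_to_bez} gives a \emph{linear} parametrization of the state space control points $\b \BEZ$ by $[\b x_1^\top, \b x_2^\top]^\top$ through the matrix $\b D$; Property~\ref{prp:conv_hull} lets us upper-bound pointwise values of the curve by the convex hull of those control points; and the tracking-invariant definition (Definition~\ref{def:tracking_inv}) together with $\bar{\mathcal{E}}\supset\mathcal{E}(t)$ lets us replace the unknown closed-loop trajectory by the reference plus a fixed error set.

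First I would handle the state constraint in \eqref{eqn:prb1_safe}. Tracking invariance gives $\b\Pi(\b x_{\rm cl}(t))\in \b x_d(t)\oplus \bar{\mathcal{E}}$, so a sufficient condition is $\b x_d(t)\in \mathcal{X}_d\ominus\bar{\mathcal{E}}$ for all $t$, where $\ominus$ denotes the Pontryagin difference (a convex set, polyhedral if $\mathcal{X}_d$ is). By Property~\ref{prp:conv_hull} this in turn follows from requiring every control point $\b \BEZ_j\in\mathcal{X}_d\ominus\bar{\mathcal{E}}$. Using Lemma~\ref{lemm:disc_to_bez}, each $\b \BEZ_j$ is a fixed linear image of $[\b x_1^\top,\b x_2^\top]^\top$ (the $j$-th block row of $\b D$), so each polyhedral membership becomes a block of linear inequalities in the endpoints.

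Next I would address the input constraint \eqref{eqn:prb1_input}. Here I would exploit the feedforward/feedback structure of the tracking controller $\b k$ that is implicit in the multi-rate construction of \cite{csomay-shanklin_multi-rate_2022}: the reference input $\b u_d(\cdot)$, obtained by reading off the highest derivative of $\b x_d(\cdot)$, is itself a B\'ezier curve whose control points are a \emph{linear} function of $\b \BEZ$ (and hence of $[\b x_1^\top,\b x_2^\top]^\top$) via the differentiation matrix $\b H$. Because $\b x_{\rm cl}-\b x_d$ is confined to $\bar{\mathcal{E}}$, the feedback correction $\b k(\b x_{\rm cl},\b x_d)-\b u_d$ lies in a fixed set $\bar{\mathcal{U}}_{\rm fb}$ computable from $\b k$ and $\bar{\mathcal{E}}$. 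Thus it suffices that every control point of $\b u_d$ lie in $\mathcal{U}\ominus \bar{\mathcal{U}}_{\rm fb}$, which again, via Property~\ref{prp:conv_hull}, is sufficient for the convex-hull-contained curve, and which again reduces to linear inequalities in $[\b x_1^\top,\b x_2^\top]^\top$.

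Finally, stacking the two families of inequalities yields a single system $\b F [\b x_1^\top,\b x_2^\top]^\top\le \b G$, and the B\'ezier curve guaranteed by Lemma~\ref{lemm:disc_to_bez} then satisfies both state and input requirements when tracked. The main obstacle I expect is the input step: it hinges on decomposing $\b k$ cleanly into a feedforward piece that depends polynomially (hence linearly in the control points) on the reference, plus a feedback piece whose image over $\bar{\mathcal{E}}$ can be over-approximated by a polytope $\bar{\mathcal{U}}_{\rm fb}$ for which $\mathcal{U}\ominus\bar{\mathcal{U}}_{\rm fb}$ is nonempty; verifying nonemptiness, and thereby nontriviality of the resulting $(\b F,\b G)$, requires $\bar{\mathcal{E}}$ (and consequently the tracking controller) to be sufficiently tight relative to the input budget, a compatibility assumption that is mild but not automatic.
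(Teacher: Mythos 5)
The paper states Theorem~\ref{thm:bez} as an imported result from \cite{future_acc} and offers no proof of its own, so there is no internal argument to compare against; your reconstruction is the standard (and, judging from the machinery the paper sets up, the intended) one: a linear parametrization of the control points by the endpoints via Lemma~\ref{lemm:disc_to_bez}, reduction of the for-all-$t$ state and input conditions to control-point membership via Property~\ref{prp:conv_hull} applied to both $\b x_d$ and its derivative curve $\b u_d$, and polytopic constraint tightening by $\bar{\mathcal{E}}$ (and by the image of the feedback term over $\bar{\mathcal{E}}$) via Definition~\ref{def:tracking_inv}, all of which stacks into affine inequalities $\b F[\b x_1^\top~\b x_2^\top]^\top \le \b G$. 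Your closing caveat --- that the tightened sets must be nonempty, i.e.\ the tracking error bound must be compatible with the input budget for $(\b F,\b G)$ to be nontrivial --- is exactly the standing assumption such constructions require and is worth making explicit.
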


The matrices $\b F$ and $\b G$ can be thought of as reachable set oracles, as they determine if there exists dynamically feasible curves between two points. This notion of B\'ezier reachable sets will be used to solve Problem 1 by first efficiently generating a graph of dynamically feasible Be\'zier curves.

A graph $\mathcal{G}$ is described by a tuple $(V, E)$ with vertices $V$ connected by edges $E$. 
For the purpose of our analysis, the vertices will represent points in the reduced-order state space $\R^n$. A directed edge between two vertices $e=(v_1,v_2)$ for $v_1,v_2\in V$ will represent the existence of a dynamically feasible trajectory connecting $v_1$ to $v_2$.
%

\section{Kinodynamic B\'ezier Graphs}

{\setlength{\textfloatsep}{3pt}
\begin{algorithm}[b!]
\caption{B\'ezier Graphs}
\label{algo:BezGraph}
\begin{small}
\begin{algorithmic}[1]
\State \textbf{hyperparameters:} $(N, T)$ \Comment{(Node count and time interval)}
\State $\mathcal G\leftarrow$ buildGraph($\mathcal{X}_d$, $\mathcal{U}$)
\State ${\mathcal{C}}\leftarrow$ cutGraph($\mathcal{G}$, $\mathcal{O}$)
\State $\b v^*_k\leftarrow$findPath$(\mathcal{C}$)
\State $\b x_d(\cdot)\leftarrow$refineWithMpc($\b v^*_k, \mathcal{O}$)
\State \Return $\b x_d(\cdot)$
\end{algorithmic}
\end{small}
 \label{alg:train}
\end{algorithm}
}

The goal of this section will be to produce desired trajectories $\b x_d(\cdot)$ which satisfy Problem~\ref{prb:nav} when obstacles are present, i.e. the free space is non-convex.
The approach will be to produce a graph of dynamically feasible B\'ezier curves leveraging the \textit{B\'ezier reachable polytopes} introduced in Theorem~\ref{thm:bez}, cut the edges that intersect obstacles, and perform a graph search that will serve as a feasible warm start for a trajectory optimization program. The paradigm is outlined in Algorithm~\eqref{algo:BezGraph} and shown in Figure~\ref{fig:main_algo}, and starts with building a graph of B\'ezier curves.

\subsection{Build Graph}
The compact state space of interest $\mathcal{X}_d$ represents the free space of the problem setup, and will serve as a seed which we can prune in the presence of obstacles.
For the sake of limiting algorithmic complexity, we make the following assumption about the description of the free space:
\begin{assumption}
The free space is given by $\mathcal{X}_d\backslash\mathcal{O}\subset\R^n$ for a set of obstacles $\mathcal{O} = \cup_i\mathcal{O}_i$ where $\mathcal{O}_i$ is convex polytope.
\end{assumption}
To begin, $N$ points are uniformly sampled $\b v_i \sim \rm U(\mathcal{X}_d)$ and the matrices $\b F$ and $\b G$ from Theorem~\ref{thm:bez} are generated. Two points $\b v_i$ and $\b v_j$ are connected with an edge $\b e_{i, j}$ if they satisfy $\b F \begin{bmatrix} \b v_i^\top & \b v_j^\top \end{bmatrix}^\top \le \b G$.
As such, every edge in the graph implies the existence of a dynamically feasible B\'ezier curve connecting them. This initial graph will be fixed throughout the remainder of the Algorithm.

{\setlength{\textfloatsep}{3pt}
\begin{algorithm}[t!]
\caption{cutHeuristic}
\label{algo:heuristic}
\begin{small}
\begin{algorithmic}[1]
\If{any($\b A \b v_i \le \b b)$} \Comment{Figure~\ref{fig:heuristic}a)}
\State return 0
\EndIf
\State $\b v^o\leftarrow$closestPoint($\b v_i)$
\State $(\b h, \b l)\leftarrow$adjacentHyperplane($\b v^o$)
\If{all$(\b h \b v_i \le \b l)$} \Comment{Figure~\ref{fig:heuristic}b)}
\State return 1
\EndIf
\State Solve QP \Comment{Figure~\ref{fig:heuristic}c)}
\State return $\|\b\delta^*\| > 0$
\end{algorithmic}
\end{small}
 \label{alg:cutHeuristic}
\end{algorithm}
}

\subsection{Cut Graph}
Given a graph of B\'ezier curves $\mathcal{G}$, we remove the edges that could result in collisions with any obstacles that are present. Given a set of obstacles $\mathcal{O}$, Property~\ref{prp:conv_hull} states that a B\'ezier curve is guaranteed to be collision-free if the convex hull of its control points does not intersect with any obstacles, i.e. $\text{conv}\{\b \BEZ\} \cap \mathcal{O} = \emptyset$. For each edge $\b e$ with control points $\b \BEZ$ and convex obstacle $\mathcal{O}_i$ characterized by $n_c$ hyperplane constraints $\b A^{\mathcal{O}_i}\in\R^{n\times n_c}$ and $\b b^{\mathcal{O}_i}\in \R^{n_c}$, the problem of determining if the curve is collision-free can be formulated as a linear program feasibility check via Property~\eqref{prp:conv_hull}, which we solve using the following quadratic program:
\begin{subequations}
\begin{align}
\min_{\b \lambda, \b\delta} \quad & \b \delta^\top \b\delta \label{eqn:feas_check} \tag{Cut-QP}\\
\textrm{s.t.} \quad & \b A^{\mathcal{O}_i}\b\BEZ \b\lambda \le \b b^{\mathcal{O}_i} + \b\delta \notag\\
  & \b \lambda_j \ge 0,~~~~~\sum_j\b\lambda_j = 1 \notag
\end{align}
\end{subequations}
where $\b\lambda\in\R^{p+1}$ is the convex interpolation variable and $\b \delta\in\R^{n_c}$ is a slack variable. Given the solution $\b\delta^*$, if $\|\b\delta^*\|_2 > 0$ then the curve is obstacle free. This program can easily be vectorized to solve all edges simultaneously.

Although this is a quadratic program and therefore can be solved efficiently, solving it for every edge and obstacle becomes too computationally expensive for real-time operation (even when parallelized), especially when the edge and and number of obstacle count is large. To address this, we introduce the heuristic outlined in Algorithm~\ref{alg:cutHeuristic}, which consists of three checks. First, we determine if any of the control points of $\b \BEZ$ lie in the obstacle -- if so, the curve may intersect the obstacle. Next, we attempt to find a separating hyperplane between the obstacle and the control points -- if one exists, then the curve will be collision free. If the points do not lie on one side of the hyperplane, then the heuristic is indeterminate, and the original quadratic program must be solved.  In practice, this heuristic typically removes > 99\% of the edges that need to be checked; furthermore, this heuristic is easily implemented on GPU as it only requires linear algebra operations, which significantly improves execution speed (as reported in Section~\ref{sec:results}).

\subsection{Find Path}
Due to the random sampling of the points in the graph, the starting and goal positions are likely not vertices. Therefore, the start and goal nodes are chosen as the closest normed-distance points to the desired start and goal positions. The edge cost is taken to be $\sum_i \|\b\BEZ_{i+1} - \b \BEZ_i\|$, the upper bound on the path length of the B\'ezier curve from Property \ref{prp:path_length}. We then use Dijkstra`s algorithm to solve Problem \ref{prb:graph}:

\begin{problem} \label{prb:graph}
    In the context of Problem \ref{prb:nav}, consider a collision-free graph of B\'ezier curves $\cal{C}=(V,E)$ produced by Algorithm~\ref{algo:BezGraph}.
    Find a path $\{\b v_k^*\}_{k=0}^K$ in the graph connecting $\b v_0^*$ to $\b v^*_K = \b\Pi(x_G)$ with $\b v_0^*$ satisfying $\Pi(\b x(0)) \in \b v_0^* \oplus \bar{\cal{E}}$.
\end{problem}

\begin{figure}[t!]
    \centering
    \includegraphics[width=0.93\columnwidth]{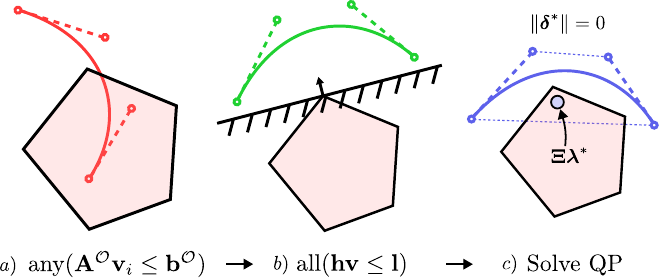}
    \caption{The heuristic employed to check if the B\'ezier curve is collision free. First, obstacle membership is checked. If not satisfied, a single proposed separating hyperplane is checked. Finally, the Quadratic Program \eqref{eqn:feas_check} is solved.}
    \vspace{-5mm}
    \label{fig:heuristic}
\end{figure}

\subsection{Refine the Path}
If the graph solve $\b v^*_k$ exists, it represents a feasible path for the system to safely traverse the cluttered environment; however, it may be significantly suboptimal. To improve its optimality, we can solve a finite time optimal control program, which balances tracking the graph solution with short-horizon optimality.
Note that by Lemma~\ref{lemm:disc_to_bez}, a solution Problem \ref{prb:graph} can be converted into a dynamically feasible curve $\b x_d(\cdot)$ for the reduced order model \eqref{eqn:rom}.
Then, we sample this curve at the optimal control time discretization to produce a sequence of reference points $\b r_k$, which can be optimized:
\begin{subequations}\label{eqn:mpc}
\begin{align}
\min_{\b \x, \b u} \quad & \sum_{k=0}^{N-1} \b x_k ^\top \b Q \b x_k + \b F \b r_k + \b u_k^\top \b R \b u_k + \b x_N ^\top \b V \b x_N \label{eqn:mpc_cost} \tag{MPC}\\
\textrm{s.t.} \quad & \b x_{k+1} = \b A \b x_k + \b B \b u_k  \label{eqn:mpc_dyn}\\
  & \b x_0 \in \b \Pi(\b x(0)) \oplus \bar{\cal{E}}  \label{eqn:mpc_ic}\\
  & \b D \begin{bmatrix}\b x_k & \b x_{k+1}\end{bmatrix} \in \mathcal{X}_d\backslash(\mathcal{O}\oplus \mathcal{E})  \label{eqn:mpc_safe}\\
  &\b F \hspace{.5mm}\begin{bmatrix}\b x_k & \b x_{k+1}\end{bmatrix} \leq \b G  \label{eqn:mpc_input}\\
  & \b x_N = \b r_N  \label{eqn:mpc_term}
\end{align}
\end{subequations}
where $\b Q, \b F\in \R^{n\times n}$ are symmetric positive definite matrices weighting distance to reference as well as path length, $\b R\in \R^{m\times m}$ is a positive definite input scaling matrix, and $\b V \in \R^{n\times n}$ is a terminal cost. The matrices $\b A\in\R^{n\times n}$ and $\b B\in\R^{n\times m}$ are the exact discretization of the integrator dynamics in \eqref{eqn:rom}.
The constraint $\mathcal{X}_d\backslash(\mathcal{O}\oplus \mathcal{E})$ is enforced by fining a separating hyperplane between the node $\b x_k$ and the closest obstacle (where the same adjacentHyperplane call form the heuristic can be made). In order to improve the quality of solutions, \eqref{eqn:mpc_cost} can be iteratively solved in an SQP fashion. 

\begin{figure}[t!]
    \centering
    \includegraphics[width=\linewidth]{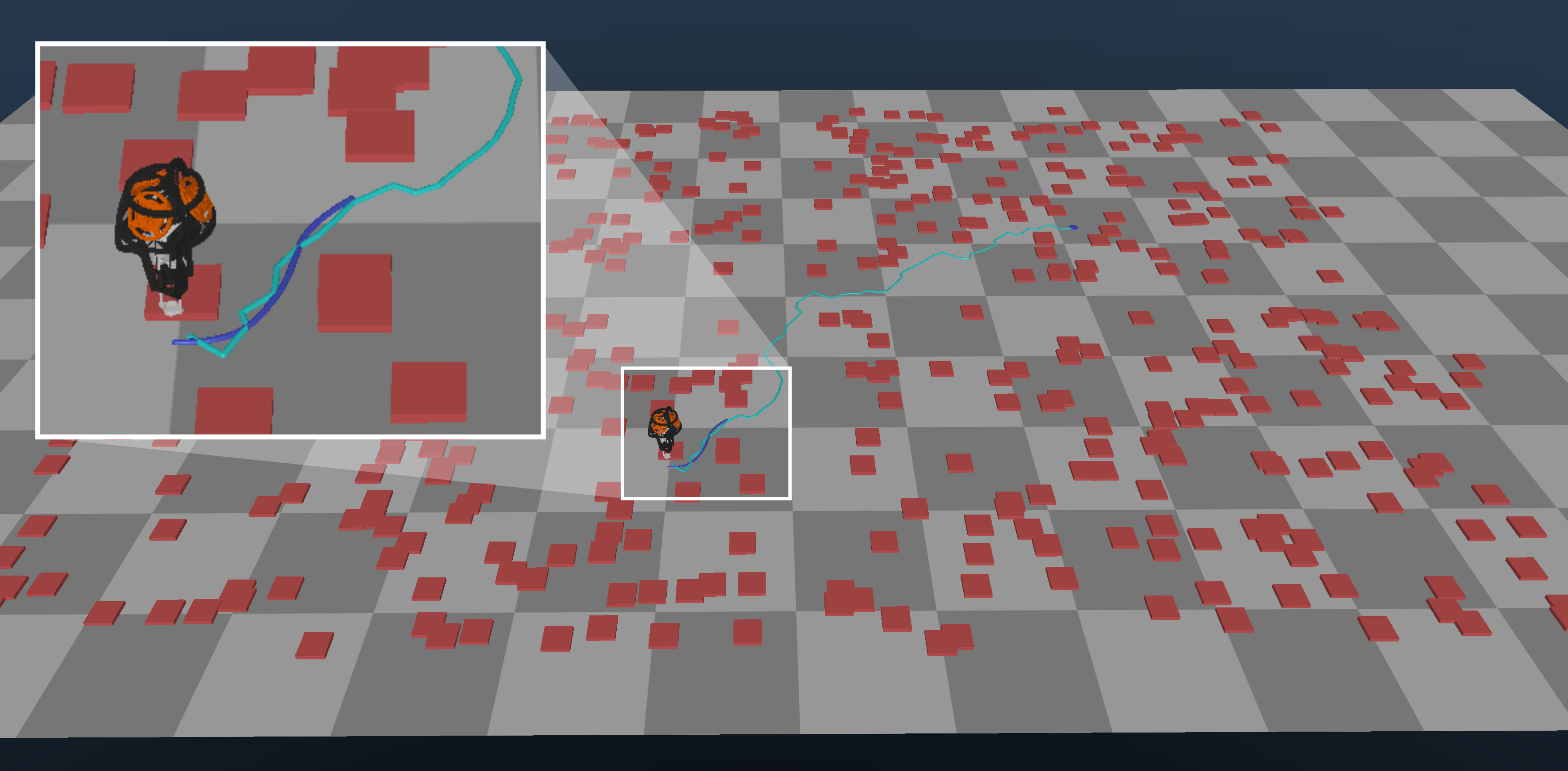}
    \vspace{-3mm}
    \caption{500 randomly generated obstacles with graph replanning at 10 Hz and \eqref{eqn:mpc_cost} at 200 Hz. Cyan indicates the B\'ezier graph solve $\b v_k^*$ and blue the MPC B\'ezier solution $\b x_d(\cdot)$.}
    \vspace{-5mm}
    \label{fig:mujoco_obst}
\end{figure}

\begin{table}[b!]
\centering
\caption{ \small Comparison of functions on CPU, parallelized CPU, and GPU. Any CPU implementation would violate real time constraints.}
\label{tab:timing}
\begin{tabular}{l||c|c}
& {cutHeuristic} & {adjacentHyperplane} \\ \hline \hline
CPU & 5920 $\pm$  36 ms & 6.24 $\pm$ 0.08 ms  \\ \hline
Parallel CPU & 3640 $\pm$  31 ms &  2.52 $\pm$ 0.33 ms \\ \hline
GPU & 72.0 $\pm$  1.5 ms & 0.61 $\pm$ 0.14 ms 
\end{tabular}
\end{table}

Importantly, the graph solve represents a feasible warm start for the MPC program, and can be thought of as providing a corridor and a dynamically feasible path which MPC can refine. The main motivation for the structures used in this paper is the notion of guaranteed feasibility:
\begin{theorem}
    If Problem \ref{prb:graph} is feasible, then Problem \ref{prb:nav} is solved by applying \eqref{eqn:mpc_cost} in closed loop.
\end{theorem}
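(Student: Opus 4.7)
The plan is to decompose the claim into three sub-claims matching the three subparts of Problem~\ref{prb:nav}, and then establish them via a standard receding-horizon argument built on top of initial feasibility inherited from the graph solve.

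First I would establish \emph{initial feasibility} of \eqref{eqn:mpc}. Given a solution $\{\b v^*_k\}_{k=0}^{K}$ to Problem~\ref{prb:graph}, Lemma~\ref{lemm:disc_to_bez} supplies a unique B\'ezier curve on each edge, and Property~\ref{prp:subd} lets me concatenate them into a single reference trajectory $\b x_d(\cdot)$ on the reduced-order model \eqref{eqn:rom}. Because each edge survived the build step, the reachability inequality $\b F[\b v_k^{*\top}\,\b v_{k+1}^{*\top}]^\top\le \b G$ of Theorem~\ref{thm:bez} holds, so the dynamics, reachable-polytope, and input constraints \eqref{eqn:mpc_dyn}--\eqref{eqn:mpc_input} (with $\b x_k := \b r_k$ obtained by sampling $\b x_d(\cdot)$ at the MPC discretization) are automatic; the cut step together with Property~\ref{prp:conv_hull} ensures that \eqref{eqn:mpc_safe} holds on each segment; the hypothesis on $\b v^*_0$ in Problem~\ref{prb:graph} combined with Definition~\ref{def:tracking_inv} gives \eqref{eqn:mpc_ic}; and the terminal constraint \eqref{eqn:mpc_term} is trivially satisfied by construction.

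Next I would argue \emph{recursive feasibility} by the usual shift-and-append trick. If $\{\b x_0,\ldots,\b x_N\}$ is feasible at time $t$, then the candidate $\{\b x_1,\ldots,\b x_N,\b r_{k+N+1}\}$ at time $t{+}1$ is feasible: the shifted prefix inherits \eqref{eqn:mpc_dyn}--\eqref{eqn:mpc_input}, the appended tail step uses the precomputed reference point which satisfies the graph-derived reachability and obstacle-avoidance conditions, and \eqref{eqn:mpc_ic} is maintained because Definition~\ref{def:tracking_inv} guarantees $\b\Pi(\b x_{\rm cl}(t{+}1))\in \b x_1\oplus\mathcal{E}\subseteq \b x_1\oplus\bar{\mathcal{E}}$. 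Once the sliding window exhausts the graph reference, the appended point is fixed at $\b v^*_K=\b\Pi(\b x_G)$, which remains feasible since the goal was assumed reachable and collision-free.

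Finally the three conditions of Problem~\ref{prb:nav} follow. Constraint \eqref{eqn:prb1_safe} comes from \eqref{eqn:mpc_safe} applied together with Definition~\ref{def:tracking_inv}: since $\b x_d(t)\in\mathcal{X}_d\backslash(\mathcal{O}\oplus\mathcal{E})$ and $\b\Pi(\b x_{\rm cl}(t))\in\b x_d(t)\oplus\mathcal{E}$, the closed-loop projection lies in $\mathcal{X}_d$ and outside $\mathcal{O}$. Constraint \eqref{eqn:prb1_input} follows from \eqref{eqn:mpc_input} through Theorem~\ref{thm:bez}, which is exactly the statement that the reachable-polytope condition certifies input admissibility of the tracking controller. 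Condition \eqref{eqn:prb1_goal} follows from the terminal constraint \eqref{eqn:mpc_term} combined with the receding-horizon shift: after finitely many steps the reference is pinned at $\b\Pi(\b x_G)$, so $\b x_d(t)\to\b\Pi(\b x_G)$ and the tracking invariant yields the $\bar{\mathcal{E}}$-neighborhood conclusion.

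The main obstacle is the recursive feasibility step, and in particular justifying the asymptotic convergence in \eqref{eqn:prb1_goal}. Showing that the MPC horizon can always be validly extended when the graph reference has been exhausted requires that the terminal point $\b\Pi(\b x_G)$ admit a self-loop satisfying both Theorem~\ref{thm:bez} and the obstacle constraint \eqref{eqn:mpc_safe}; this is implicit in the problem setup but would need to be stated as a mild assumption (e.g., $\b\Pi(\b x_G)\oplus\bar{\mathcal{E}}\subseteq\mathcal{X}_d\setminus\mathcal{O}$). The convergence argument itself is then a straightforward consequence of the terminal equality constraint together with the bounded tracking error guaranteed by Definition~\ref{def:tracking_inv}.
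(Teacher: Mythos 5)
Your proposal is correct and follows essentially the same route as the paper: it first establishes initial feasibility of \eqref{eqn:mpc_cost} from the graph solve using Lemma~\ref{lemm:disc_to_bez}, Property~\ref{prp:subd}, Theorem~\ref{thm:bez}, and Definition~\ref{def:tracking_inv}, and then obtains recursive feasibility, constraint satisfaction, and convergence via a receding-horizon tube-MPC argument. The only difference is that you spell out the shift-and-append step and the implicit terminal assumption (that $\b\Pi(\b x_G)\oplus\bar{\mathcal{E}}$ is collision-free and admits a feasible self-loop) explicitly, whereas the paper delegates exactly this content to a citation of standard robust tube MPC theory after verifying a single closed-loop step.
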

\begin{proof}
    First, we must show that at time $t=0$, the solution $\b v_k^*$ from Problem \ref{prb:graph} provides a feasible solution for \eqref{eqn:mpc_cost}. Let $\b r_k$ denote the refined sequence of $\b v_k^*$, which is constructed to produce the reference for \eqref{eqn:mpc_cost} using Property~\ref{prp:subd}. As this is associated with a B\'ezier curve for the reduced-order model, it satisfies the discrete dynamics \eqref{eqn:mpc_dyn}. Next, by the definition of Problem \ref{prb:graph}, we know $\Pi(\b x_0) \in \b v_0^* \oplus \bar{\cal{E}}$, which implies that \eqref{eqn:mpc_ic} is satisfied by Mikoswki addition properties. As $\b v_k^*$ is in the collision-free graph $\mathcal{C}$, it satisfies \eqref{eqn:mpc_input} and \eqref{eqn:mpc_safe}. By construction, \eqref{eqn:mpc_term} is satisfied. Therefore, $\b r_k^*$ represents a feasible solution for \eqref{eqn:mpc_cost}.

    Next, take $\b x_k^*$ to denote the MPC solution and consider the time interval $I=\begin{bmatrix}0& h\end{bmatrix}$ for MPC discretization $h>0$. From Lemma~\eqref{lemm:disc_to_bez}, we know that there exists a unique curve $\b x_d(\cdot)$ defined over $I$ connecting $\b x_0^*$ to $\b x_1^*$. By Theorem~\ref{thm:bez}, the closed loop system tracking this curve will satisfy \eqref{eqn:prb1_safe} and \eqref{eqn:prb1_input} for all $t\in I$. Furthermore, by Definition~\eqref{def:tracking_inv}, we have that $\b\Pi(\b x(h))\in \b x_1^*\oplus\bar{\mathcal{E}}$. Therefore, we can appeal to standard Robust tube MPC theory \cite{mpc_book} to claim recursive feasibility and robust stability of the closed loop system tracking the MPC solution, meaning \eqref{eqn:prb1_goal} is achieved and \eqref{eqn:prb1_safe} and \eqref{eqn:prb1_input} are satisfied for all time.
\end{proof}
We have shown that if there is a solution to the graph problem in Algorithm~\ref{algo:BezGraph}, then the closed loop system tracking MPC will satisfy state and input constraints $\mathcal{X}_d\backslash \mathcal{O}$ and $\mathcal{U}$ for all time, and will approach a neighborhood of the goal $\b x_G$.

\section{Results}
\label{sec:results}

\begin{figure}[t!]
    \centering
    \includegraphics[width=\linewidth]{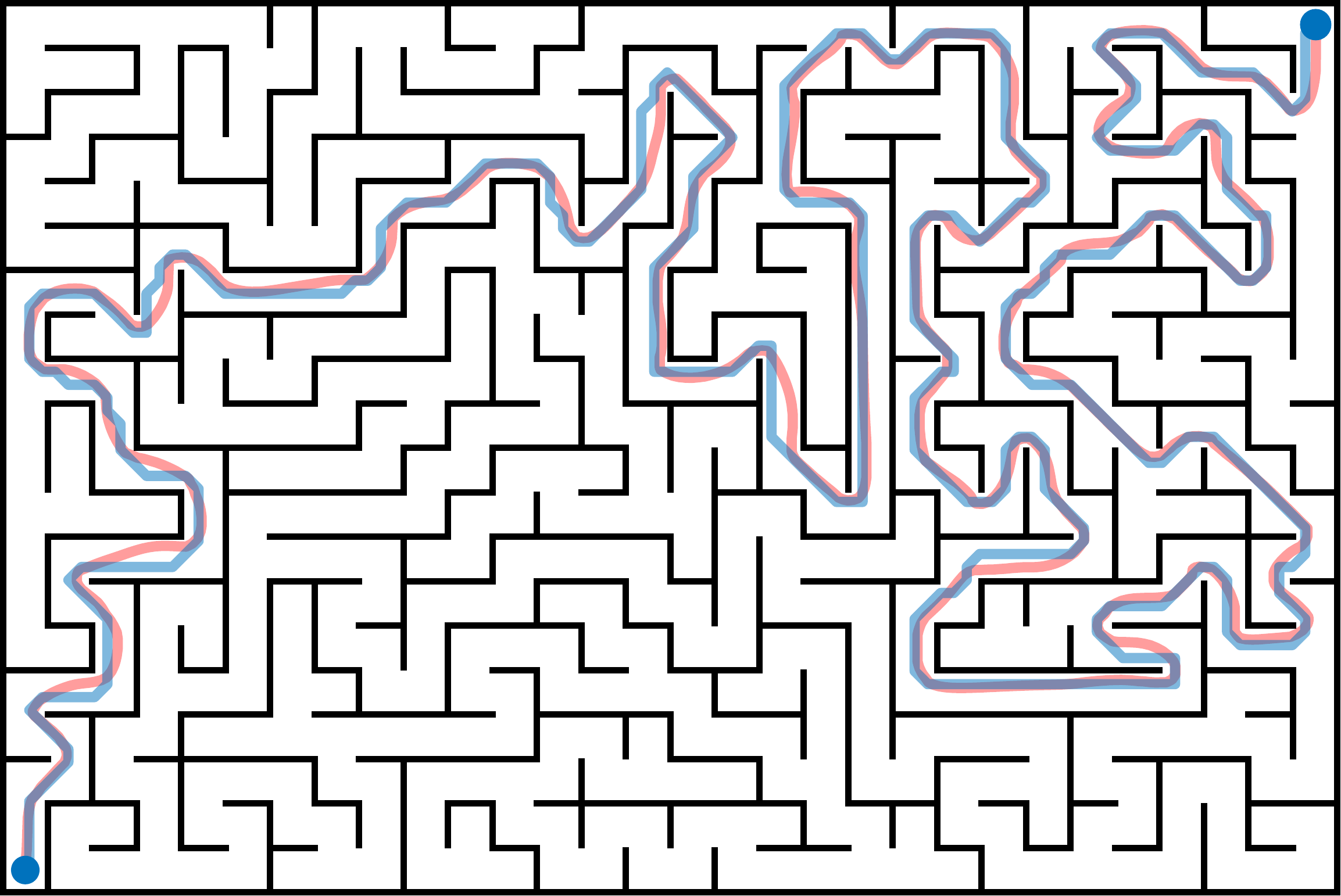}
    \vspace{-3mm}
    \caption{A long-horizon maze with 600 cells, 300 obstacles and 50,000 edges. The graph cut is solved at 10 Hz, and graph solve and MPC both run at 200 Hz. Blue represents the graph solve, and red the closed loop system behavior.}
    \vspace{-4.5mm}
    \label{fig:maze}
\end{figure}

\begin{figure*}
    \centering
    \includegraphics[width=\linewidth]{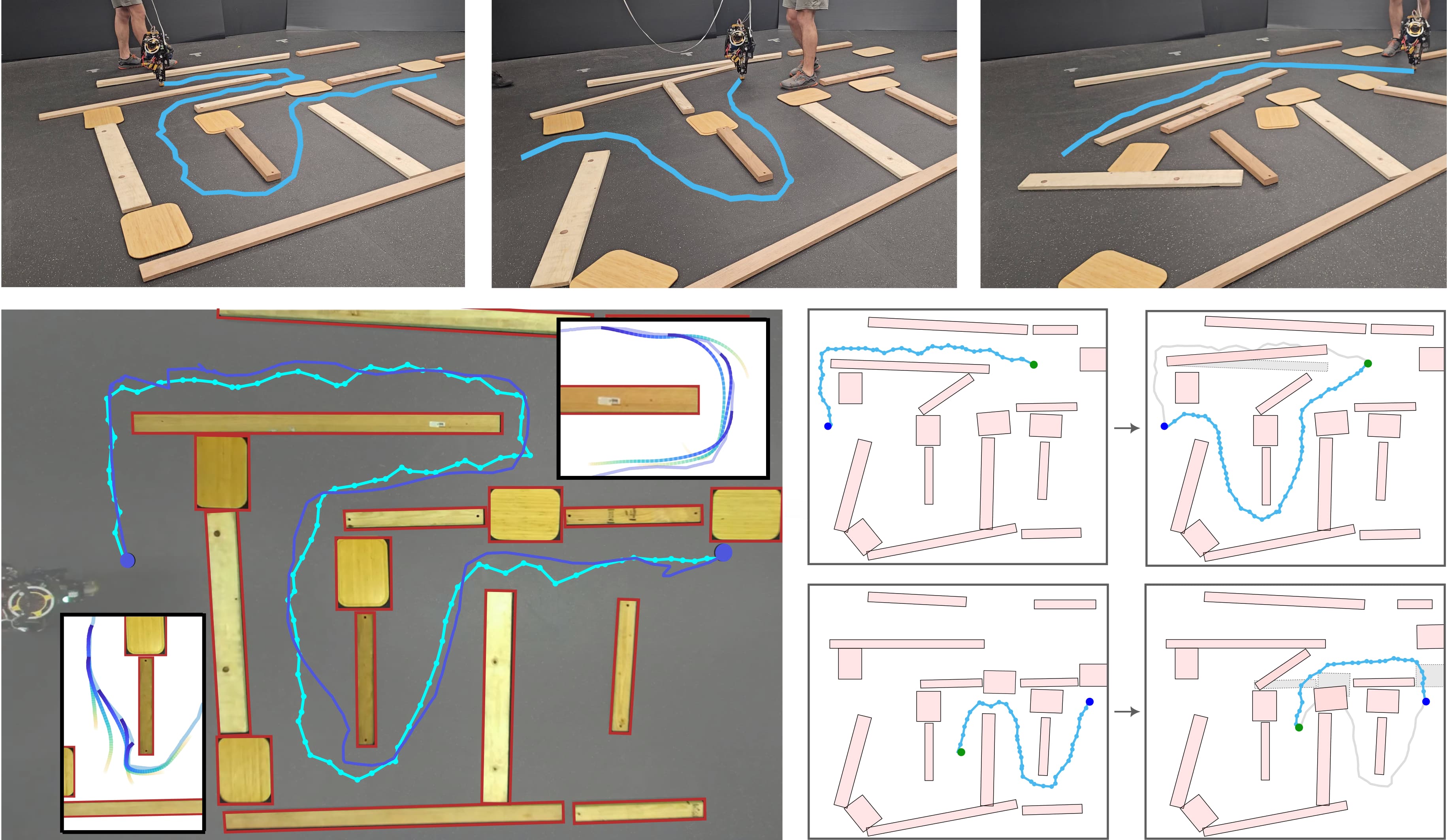}
    \vspace{-3mm}
    \caption{Experiments run on the ARCHER hardware Platform. (Top) 3 snapshots of the graph solve for various obstacle configurations. (Left) An overhead image of the graph solve and the closed loop system trajectory. In the upper right and bottom left corner, the MPC refinement around the corner can be seen. (Right) Freeze frames of the graph updating online when the obstacle locations are moved.}
    \vspace{-4.5mm}
    \label{fig:hardware}
\end{figure*}

We deploy Algorithm \ref{algo:BezGraph} on the 3D hopping robot, ARCHER \cite{ambrose_creating_2022} -- a video can be found at \cite{supplemental_video} and the code for this project is available at \cite{code}. Let $(\b p, q)\in\R^3\times \mathbb{S}^3$ denote the global position and quaternion of the robot, and $(\b v, \b \omega)\in\R^3 \times \mathfrak{s}^3$ the global linear velocity and body frame angular rates. The full state of the robot $\b x\in\mathcal{X}\subset\R^{20}$ contains all of the above states, as well as foot and flywheel positions and velocities. Although planning dynamically feasible paths on the full hybrid system dynamics is possible \cite{csomay2023nonlinear}, the added complexity of planning collision free paths on the full order state directly quickly becomes intractable. The most challenging problem preventing the nonconvex problem from being directly solved in real time is appropriately choosing a collection of corridors to traverse. By combining problems at various time scales, the graph solve is able to provide a coarse estimate of the keep in constraints while foregoing path optimality, and the short horizon MPC is able to refine it into a more optimal path. Therefore, we choose to plan desired center of mass trajectories $\b x_d \in \R^4$ with virtual force inputs $\b u_d\in\R^2$, which are assumed to have double-integrator dynamics.

The projection map $\b \Pi:\mathcal{X} \to \R^4$ is taken to be the restriction of the full order state to the center of mass $x$ and $y$ positions and velocities. The tracking controller $\b k$ is a Raibert-style controller, which takes in desired center of mass state and input trajectories and produces desired orientation quaternions as:
\begin{align*}
    q_d(\b x, t) = \b K_{\rm fb} (\b \Pi(x) - \b x_d(t)) + \b K_{\rm ff}\b u_d(t)
\end{align*}
and desired angular rates $\b \omega_d \equiv \b 0$. This desired quaternion is then tracked by a low-level controller:
\begin{align*}
    \b u(\x, t) = -\b K_{\rm p}  \mathbb{I}\textrm{m}(q_{ d}(t)^{-1} q) - \b K_{\rm d}(\b\omega - \b\omega_{ d}(t)),
\end{align*}
which runs at 1 kHz. For all experiments, the layered controller was running on an Ubuntu 20.04 machine with an AMD Ryzen 5950x @ 3.4 GHz, an NVIDIA GeForce RTX 4090, and 64 Gb RAM. 

In order to solve \eqref{eqn:feas_check} and \eqref{eqn:mpc_cost}, the OSQP library \cite{osqp} was used with maximum iterations limited to 50. The heuristic for cutting the graph and getting the separating hyperplanes was running on a custom written CUDA kernel. Using the GPU to perform these operations was critical to real-time performance -- as summarized in Table~\ref{tab:timing}, the CUDA kernel provided a significant speedup as compared to multithreaded CPU implementation. 
A nominal graph with 0.5 second B\'ezier curves and a 50 node MPC horizon with a timestep of 0.1 s were used. Each iteration of \eqref{eqn:mpc_cost} was solved at 100 Hz, and in practice 1 SQP iteration was taken. Finally, path length cost was scaled significantly higher than tracking cost in order to incentivize shorter paths than the graph solve could provide. 

\subsection{Simulation and Hardware}
As can be seen in Figure~\ref{fig:mujoco_obst} and Figure~\ref{fig:maze}, the proposed framework is able to solve extremely long horizon tasks with numerous obstacles in real time. Figure~\ref{fig:mujoco_obst} shows how the MPC solution $\b x_d(\cdot)$ can refine the graph solve $\b v_k^*$ to improve the optimality of the path while maintaining collision avoidance. In Figure~\ref{fig:maze}, a grid was used instead of random sampling due to the structured nature of the problem. In both of these settings, about 5000 nodes were sampled for the graph, leading to 50,000 edges to have to be processed every graph solve time step. In this setup, the \eqref{eqn:feas_check} takes 2.7 seconds to run per obstacle, even when limited to 50 solver iterations. When the heuristic runs, it is able to remove > 99\% of the edges and leads to solve times of 10-20 ms.

For the hardware setup seen in Figure~\ref{fig:hardware}, in order to estimate obstacle locations, we had an overhead ZED 2 camera and used the SAM2 segmenting repository \cite{kirillov2023segany}. Once the experiment started, the segmenter was initialized with a single click per obstacle, and was able to parse obstacle locations for the remainder of the experiment, which were streamed over ROS to the hierarchical controller. 
SAM2 segmented 20 obstacles at 4 Hz, and the graph cut and MPC ran at 50 Hz and 200 Hz, respectively.
We provided various cluttered environments to ARCHER, which was able to solve for feasible paths and traverse them in real time. In the bottom right of Figure~\ref{fig:hardware}, the real-time replanning of the graph can be seen as the obstacles were kicked around in the environment.




\section{Conclusion}
We proposed a layered control architecture for performing path planning through cluttered environments. By leveraging the properties of B\'ezier curves, a kinodynamically feasible graph for a planning model could be efficiently constructed, and could provide curves that were guaranteed to satisfy state and input constraints of the full order system when tracked.
By leveraging the CPU and the GPU together, Algorithm~\eqref{algo:BezGraph} could efficiently be run in real time for extremely long-horizon tasks such as maze solving both in simulation and on hardware.
Future work includes finding automated ways of determining the graph size and Be\'zier horizon length and using adaptive time scales for combining high performance and high precision tasks.

\bibliographystyle{IEEEtran}
\balance
\bibliography{main.bib}
\end{document}